\documentclass[10pt,twocolumn,letterpaper]{article}

\usepackage[options]{cvpr}      

\usepackage{amsthm}
\newtheorem{theorem}{Theorem} 
\definecolor{cvprblue}{rgb}{0.21,0.49,0.74}
\usepackage[pagebackref,breaklinks,colorlinks,allcolors=cvprblue]{hyperref}

\usepackage{amsthm}
\usepackage{booktabs}
\usepackage{siunitx}
\usepackage{natbib}
\usepackage{tikz}
\usepackage{pgfplots}
\pgfplotsset{compat=1.18} 
\usepackage{graphicx}
\usepackage{subcaption} 
\usetikzlibrary{arrows.meta,positioning,fit,calc}
\usepackage{caption}
\usepackage{tcolorbox}
\usepackage{fancyhdr}
\AtBeginDocument{%
  \setlength{\abovedisplayskip}{4pt plus 2pt minus 1pt}%
  \setlength{\belowdisplayskip}{4pt plus 2pt minus 1pt}%
  \setlength{\abovedisplayshortskip}{2pt plus 1pt minus 1pt}%
  \setlength{\belowdisplayshortskip}{2pt plus 1pt minus 1pt}%
}
\def\paperID{19} 
\def\confName{CVPR}
\def\confYear{2026}

\title{From Actions to Understanding: Conformal Interpretability of Temporal Concepts in LLM Agents}

\author{
 \textbf{Ramneet Kaur* \textsuperscript{2}},
 \textbf{Trilok Padhi* \textsuperscript{1}},
 \textbf{Krishiv Agarwal \textsuperscript{4}},
 \textbf{Adam D. Cobb\textsuperscript{2}},\\
 \textbf{Daniel Elenius\textsuperscript{2}},
  \textbf{Manoj Acharya\textsuperscript{2}},
  \textbf{Colin Samplawski\textsuperscript{2}},
 \textbf{Alexander M. Berenbeim\textsuperscript{3}},\\
 \textbf{Nathaniel D. Bastian\textsuperscript{3}},
 \textbf{Susmit Jha\textsuperscript{2}},
 \textbf{Ugur Kursuncu\textsuperscript{1}},
 \textbf{Anirban Roy\textsuperscript{2}}
\\
    \textsuperscript{*}These authors contributed equally \\
 \textsuperscript{1}Georgia State University, Atlanta, USA \\
 \textsuperscript{2}Computer Science Lab, SRI, Menlo Park, USA \\
 \textsuperscript{3}Robotics Research Center, United States Military Academy, West Point, NY USA\\
 \textsuperscript{4}Computer Science Department, University of Florida, USA \\
\\
 \small{
   \textbf{Correspondence:} \href{mailto:email@domain}{tpadhi1@student.gsu.edu, ramneet.kaur@sri.com}
 }
}

\begin{document}
\maketitle
\begin{abstract}
        Large Language Models (LLMs) are increasingly deployed as autonomous agents capable of reasoning, planning, and acting within interactive environments. Despite their growing capability to perform multi-step reasoning and decision-making tasks, internal mechanisms guiding their sequential behavior remain opaque. This paper presents a framework for interpreting the temporal evolution of concepts in LLM agents through a step-wise conformal lens. We introduce the \textit{conformal interpretability framework for temporal tasks}, which combines step-wise reward modeling with conformal prediction to statistically label model's internal representation at each step as successful or failing. Linear probes are then trained on these representations to identify directions of temporal concepts—latent directions in the model's activation space that correspond to consistent notions of success, failure or reasoning drift. Experimental results on two simulated interactive environments, namely ScienceWorld and AlfWorld, demonstrate that these temporal concepts are linearly separable, revealing interpretable structures aligned with task success. We further show preliminary results on improving an LLM agent's performance by leveraging the proposed framework for steering the identified successful directions inside the model. The proposed approach, thus, offers a principled method for early failure detection as well as intervention in LLM-based agents, paving the path towards trustworthy autonomous language models in complex interactive settings.
    
  
\end{abstract}  
\thispagestyle{fancy}
\section{Introduction}
\label{sec:intro}

\begin{figure*}[t]
    \centering
    \includegraphics[width=\textwidth]{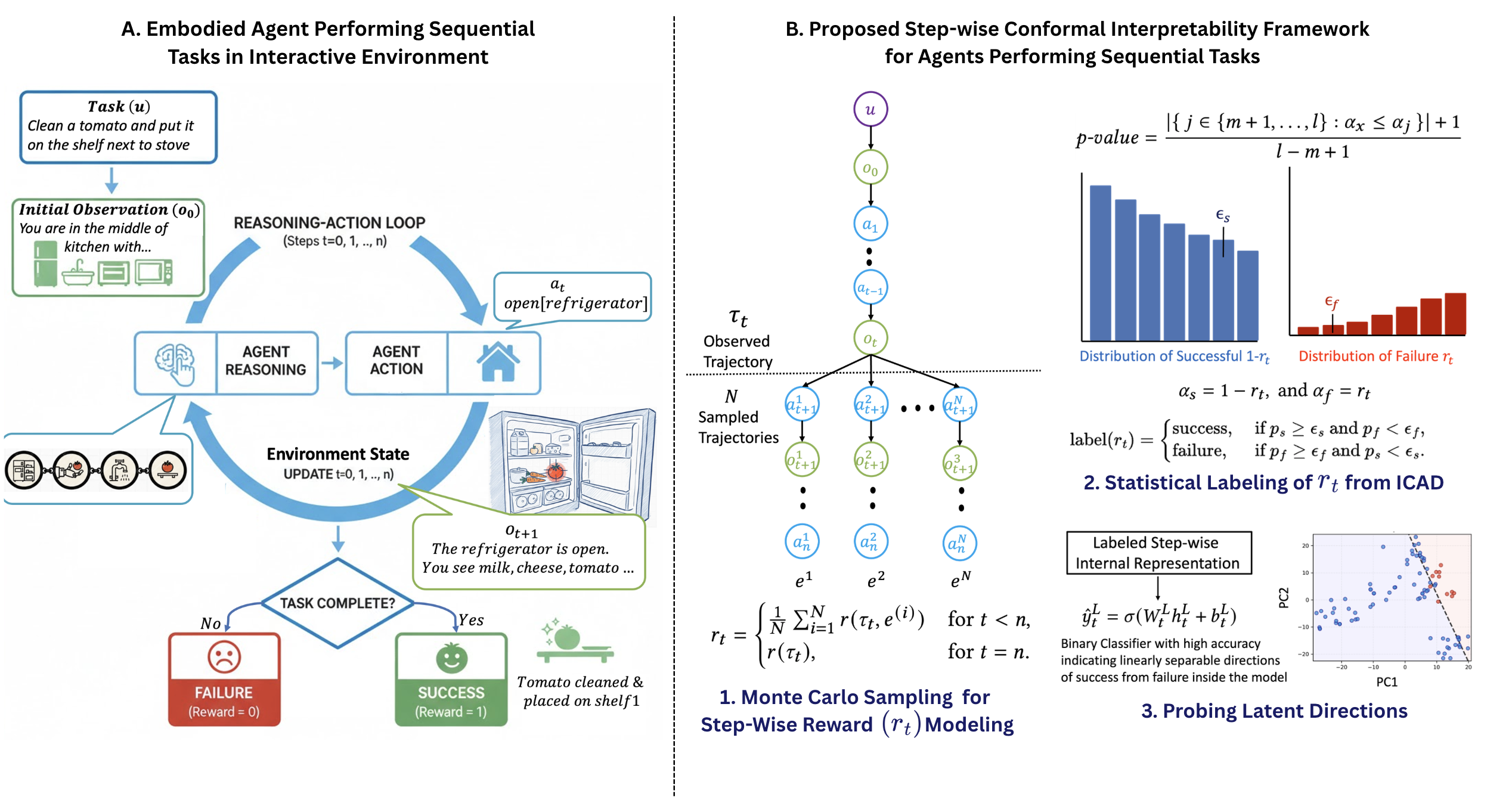}
    \caption{\footnotesize{
    \textbf{A.} We consider the problem of temporal interpretability of LLM agents trained to perform sequential tasks in complex environments.
    \textbf{B.} Proposed framework combines step-wise reward modeling with conformal labeling to distinguish success and failure at each timestep. Linear probes are trained on the model’s internal representations to test the hypothesis that step-wise notions of success and failure are linearly separable within the representation space.
    }}
    \label{fig:paper_idea}
\end{figure*}


Large Language Models (LLMs) have rapidly evolved from static text generators~\citep{gpt, llama-2} to autonomous agents capable of 
navigating, planning, and acting in the physical world~\citep{wangconformalnl2ltl, ricl, llm_agent1}.
When deployed in embodied simulators such as Alfworld~\citep{alfworld}, and ScienceWorld~\citep{sciworld}, these agents exhibit impressive decision-making abilities on sequential goals for accomplishing the assigned task~\citep{planning_affordances}. However, the reasoning process behind such sequential behavior remains largely opaque as these LLM agents continue to operate as \textit{black box}, leaving practitioners without reliable explanations for \textit{why} a model succeeds or \textit{where} it fails within the task trajectory. This has led to increasing interest in the \textit{interpretability} of LLMs, aimed at uncovering how models process inputs to generate outputs in a manner that is transparent and understandable to humans.

Traditional interpretability frameworks such as attribute or feature visualization~\citep{attribute, shap}, mechanistic interpretability via sparse autoencoders~\citep{sae}, activation space analysis via representation engineering~\citep{repe} and universal steering~\citep{us} offer valuable insights into mappings between \textit{standalone input} and the model's latent space. These approaches are, however, fundamentally limited in capturing \textit{temporal dynamics} within LLM-based agents as they operate on single standalone inputs such as a image, text or an image-text pair. While performing a sequential task
in an interactive environment, the agent’s decision at any step depends not only on the current observation but also on the entire trajectory history with prior reasoning traces, environment responses, and accumulated context. 
Model's internal representation space evolves through time, encoding both correct and incorrect reasoning directions. Understanding these evolving representations requires a temporally aware interpretability framework interpretability.

A straightforward approach to adapting the existing interpretability frameworks for temporal dynamics would be treating the entire trajectory as a single input and performing analysis on those. This approach, however, does not provide any granularity on \textit{which intermediate step} of the agent contributed to success or failure of the assigned task. An agent that successfully completes the task after ten steps hides the possibility that six of those might be suboptimal or irrelevant. Without a step-level notion of ``success'', it becomes impossible to pinpoint when the model’s internal dynamics shift from aligned reasoning to failure modes such as hallucination or irrelevant exploration.

A motivating example of the Llama-2~\citep{llama-2} agent trained to perform household tasks in the AlfWorld environment is as follows. For the assigned task of \textit{``placing a roll of paper next to the toilet''}, 
we observed that the agent starts on the right track  by moving towards the toilet and searching shelves to locate the roll. Yet midway through the trajectory, its internal reasoning begins to drift. Instead of continuing the search, the agent hallucinates that a non-existent drawer contains the paper roll, issuing an invalid action sequence. This behavioral deviation—despite earlier correct reasoning—leads to task failure, even though the initial steps were coherent and purposeful.


Such examples (more shown in Fig.~\ref{fig:sw_steering_results}) highlight a central challenge in \textit{temporal interpretability}: the final success or failure of a task often obscures \textit{where} in the sequence the agent’s internal representations began to diverge. A trajectory that ultimately fails may contain multiple locally successful steps, while an apparently successful run may involve accidental recoveries after erroneous decisions. Current interpretability methods—focused on isolated prompts, static embeddings, or single-step explanations—cannot capture these evolving internal transitions, nor can they identify \textit{when} the model’s reasoning begins to deviate.

This work introduces a \textit{step-wise conformal interpretability framework} that transforms the problem of agent's final evaluation on a multi-step sequential task into \textit{temporal representation analysis}. 
We posit that the \textit{directions of success and failure are geometrically separable} in an LLM fine-tuned to perform sequential tasks, .i.e, these models internally have separable notions of success and failure. 
To test this hypothesis, we build the proposed interpretability framework on the following three key components (Fig.\ref{fig:paper_idea}(B)):\\
\textbf{1. Step-Wise Reward Modeling:} We follow~\citet{ipr}'s approach on generating fine-grained step-wise rewards using Monte Carlo sampling over future trajectories. This transforms sparse final rewards into dense temporal feedback signals on success (or failure) of every step.\\
\textbf{2. Statistical Labeling via Conformal Prediction:} We propose using Inductive Conformal Prediction (ICP) to label model's internal representation at each step as successful or failing with provable confidence bounds.\\
\textbf{3. Probing Latent Directions:} We train classifiers (or linear probes) on layer and time-conditioned agent's activations to distinguish the latent space of success from failure. 

\textbf{Key Findings.} High accuracy and F1-scores of linear probes on Llama2-7B in two complex simulated environments, namely ScienceWorld~\citep{sciworld} and AlfWorld~\citep{alfworld}, validate our hypothesis that an LLM fine-tuned for sequential tasks develops an internal notion of linearly separable step-wise success across (a) timesteps, (b) layers, and (c) domains: both in-distribution and out-of-distribution.

\textbf{Steering the model towards Success.} As observed in prior work~\citep{shinnreflexion, wu2023brief} and validated in our experiments, these agents lack the intrinsic ability to self-correct themselves back towards the success directions and often continue drifting along failure trajectories once the deviation occurs. 
The proposed framework lays the foundation for \textit{steering} LLM agents towards the right direction by enabling targeted interventions when early signs of hallucination or misalignment emerge in their internal representations. We conduct preliminary experiments on steering the model towards its (identified) internal `success' directions and observe that the accuracy of the steered model improves from the baseline model, providing evidence on the practical use of the proposed framework.

\section{Related Work}
\label{sec:rel}
\textbf{LLM Agents for Interactive Embodied Environments.}
LLMs have evolved beyond text generation to function as powerful policy models for decision-making in interactive environments. Early systems like WebGPT~\citep{nakano2021webgpt} and SimpleTOD~\citep{hosseini2020simpletod} used human feedback or annotated dialogues for interactive learning, while ReAct~\citep{yao2022react} demonstrated a more scalable approach by integrating natural language reasoning directly into the decision loop, removing human from the loop. Building on this paradigm, works such as IPR~\citep{ipr}, SayCan~\citep{ahn2022saycan} and Inner Monologue~\citep{huang2022innermonologue} extend LLM-based decision-making to embodied and interactive domains, where agents reason over both language and environmental feedback. We perform interpretability analysis on LLMs fine-tuned using ReACT style to autonomously perform sequential tasks in interactive environments. 


\textbf{Interpretability of Large Language Models (LLMs).}
Existing interpretability methods include attribution techniques (e.g., saliency maps and masking-based perturbations) that relate outputs to input tokens~\citep{saliency_map,masking_explainity}, and post-hoc methods where the model explains or verbalizes its reasoning~\citep{self_explain,cot_reasoning}. These approaches, while broadly applicable, provide limited insight into the mechanisms underlying model behavior. Mechanistic interpretability instead analyzes internal activations, either at the level of individual neurons or distributed features~\citep{sae,repe,us}. However, most work is static, associating standalone inputs with concepts and ignoring how such concepts evolve over time as models reason and act. Time-series interpretability for LLMs has been explored primarily in domains such as weather, finance, and health~\citep{interp_time_series_forecasting,interp_time_series_classification}, but not in the interactive agentic settings we study.

\textbf{Use of Probes for Interpretability Analysis.}
Research on probes in LLMs has developed into a major branch of linguistic and interpretability analysis. Early studies~\citep{conneau2018cram, hewitt2019structural},  used linear probes to show that intermediate layers of models like BERT~\citep{tenney2019bert} encode rich linguistic structures, including parts of speech, syntactic trees, and semantic roles. Foundational work by \citet{alain2017understanding} introduced the concept of auxiliary classifiers as probes, inspiring subsequent analysis of internal representations in neural networks. Recent work shows that truthful answers to factual questions correspond to approximately linear directions in activation space~\citep{geom_of_truth}. We similarly use linear probes on hidden states of an LLM agent to test whether success vs.\ failure during sequential tasks is encoded along separable directions over time, using probe accuracy as a proxy for this separability.

\textbf{Conformal Prediction in Explainable AI.}
Conformal prediction (CP) provides distribution-free uncertainty quantification via calibrated prediction sets with guaranteed coverage~\citep{cp}. Recent work has begun to apply CP to LLMs for uncertainty estimation, out-of-distribution detection, safety, and evaluation~\citep{addressing_uncertainty_llms,polysemantic_dropout,prune_n_predict,safepath,llm_judge_uncertainty,conformal_language_modeling}. In contrast, we use CP in a novel way: to assign calibrated success/failure labels to temporal representations of an LLM agent, yielding bounded error rates when constructing trajectory-level interpretability signals.
\section{Background}
\label{sec:back}
\subsection{Performing Sequential Tasks in Interactive Environment}

\paragraph{Problem Setting:} As shown in Fig.~\ref{fig:paper_idea} (A), we consider a general class of sequential decision-making tasks for LLM agents in an interactive environment. Specifically, an LLM interacts with the environment by receiving textual observations in response to each action performed by the model. The goal of the agent is to execute a coherent sequence of steps that lead to the successful task completion while reasoning and acting at each step through natural language. 
\vspace{-2mm}
\vspace{-2mm}
\paragraph{Task Formulation:}
Formally, the task can be represented as a partially observable Markov decision process, described by the tuple $(U, S, O, A, T, R)$ where $U$ represents the space of natural language task instructions, $S$ is the set of environment's states and $O$ is the corresponding observation space providing textual information on the environment's state, and 
$A$ denotes the discrete action space defined in natural language. The transition dynamics $T: S \times A \rightarrow S$ govern how the environment's state evolves after an action, and the reward $R$ provides scalar feedback indicating the overall task performance of the agent.
\vspace{-4mm}
\paragraph{Task Execution:} At the start of each episode, the agent is provided with a task instruction $u \in U$ and an initial observation $o_0$ that describes the environment’s initial state $s_0$. 
At any discrete time step $t$, the agent takes an action according to its learned policy $\pi_{\theta}$:
\[
a_t \sim \pi_{\theta}(\cdot \mid \tau_{t-1}).
\]
Here $\tau_{t-1} = (u, s_0, a_0, \ldots, a_{t-2}, s_{t-1})$ is the episode's history till the previous time-step. Upon executing $a_t$, the environment transitions to a new state $s_t = T(s_{t-1}, a_t)$ and produces the corresponding observation $o_t$ for the LLM.  This iterative process continues until the task is successfully completed or the maximum number of steps is reached. At the end of episode, the environment provides a scalar reward summarizing the overall task performance of the agent.

\begin{tcolorbox}[
    colback=gray!5,
    colframe=black!60,
    title=\textbf{Example Task for a Household LLM Agent},
    fonttitle=\bfseries,
    boxrule=0.6pt,
    arc=3pt,
    left=4pt, right=4pt, top=4pt, bottom=4pt,
    before skip=4pt,
    after skip=4pt
]
\small

\textbf{Task:} 
$u =$ \textit{``Clean a tomato and put it on the shelf next to the stove.''}

\medskip

\textbf{Initial Observation:}  
$o_0 =$ \textit{``You are in the middle of a kitchen. You observe a refrigerator, washbasin, microwave, shelf1 next to stove, and shelf2 next to basin.''}

\medskip

To complete the task, the agent samples a sequence of sub-tasks from its learned policy, such as opening the refrigerator to find a tomato, washing it in the washbasin, and placing it on shelf1 next to the stove. After each action, the environment returns a new textual observation. 

\medskip

For example, after action $a_t =$ \textit{open[refrigerator]}, the environment might return:
$o_{t+1} =$ \textit{``The refrigerator is now open. You see milk, tomato, cheese, and carrots.''} \\

This iterative action–observation loop continues until the task is completed or the step limit is reached, with the correctness of each action contributing to the overall success.
At the end of the episode, the environment provides a scalar reward summarizing the agent’s task performance, e.g., a value of $1$ if the tomato is washed and correctly placed.

\end{tcolorbox}

\subsection{Supervised Fine-Tuning of LLM Agents}
\label{sec:sft}
To equip a large language model (LLM) with core agentic capabilities, we perform \textit{supervised fine-tuning} (SFT) on expert demonstrations, aligning the model’s policy with trajectories that exemplify correct reasoning, decision-making, and action execution in interactive environments~\citep{ipr}.
\vspace{-3mm}
\paragraph{Expert Demonstrations and the ReAct Paradigm:}
In agentic settings, the model must not only produce the correct action but also reason about why that action is appropriate given the current context. To capture this reasoning–action interplay, we adopt the \textit{ReAct} (Reasoning and Acting) format~\citep{yao2022react} of the expert demonstrations for SFT. Here, each step taken by the agent consists of a natural language reasoning trace followed by an executable action.  An example on how the ReAct step looks like is as follows: \texttt{Thought: I need to open the refrigerator to check for tomato.
Action: open[refrigerator].}

Such paired data explicitly teach the model how to alternate between reflective reasoning (\texttt{Thought}) and executable (\texttt{Action}) in the environment, promoting explainability in its downstream behavior.

\vspace{-0.5mm}
\paragraph{Expert Trajectory Dataset Construction:}
Let $ET = \{(u^{(i)}, \tau^{(i)})\}_{i=1}^{|ET|}$ denote a collection of \textit{E}xpert \textit{T}rajectories, where $u^{(i)}$ is the natural language task instruction and $\tau^{(i)} = (s_o, a_0, s_1, \ldots, a_{final}, s_{final})$ is the sequence of actions and corresponding states (expressed as observations in natural language) from the expert’s interaction with the environment.
Each trajectory is annotated in ReAct form, providing the reasoning text before each action. 
These demonstrations can originate from human experts, high-performing teacher models such as GPT~\citep{gpt}, or curated datasets collected through scripted interaction with simulation environments.

\paragraph{Training Objective:}
During SFT, the LLM learns to imitate the expert’s behavior by maximizing the likelihood of the expert trajectory given the task instruction:
\[
\mathcal{L}_{\mathrm{SFT}}(\theta)
= - \mathbb{E}_{(u, \tau) \sim ET}
  \big[ \log \pi_{\theta}(\tau \mid u) \big],
\]
Here, $\theta$ is the set of model parameters. In practice, the joint probability of the entire trajectory can be decomposed into a sequence of conditional action probabilities:
\[
\pi_{\theta}(\tau \mid u) 
= \prod_{t=1}^{final} \pi_{\theta}(a_t \mid u, s_o, a_0, \ldots, s_{t-1}),
\]
which leads to a token-level autoregressive training objective:
\[
\mathcal{L}_{\mathrm{SFT}}(\theta)
= - \mathbb{E}_{(u, \tau) \sim ET}
  \Big[ \sum_{t=1}^{final} 
  \log \pi_{\theta}(a_t \mid u, \tau_{t-1}) \Big],
\]
where $\tau_{t-1}$ denotes the trajectory history up to step $t-1$.
This objective encourages the agent to reproduce the expert’s next action at each step, conditioned on the task and its full prior context on its previous steps and the environment’s intermediate observations.

\subsection{Conformal Prediction}
Conformal prediction~\citep{cp} is a statistical framework for quantifying how well a new input sample aligns with a reference data distribution. At its core, the framework relies on a \emph{non-conformity measure} (NCM), a real-valued function that quantifies the extent to which an input deviates from the behavior observed in the reference data. 
Given a dataset $X = \{x_1, x_2, \ldots, x_l\}$ drawn i.i.d.\ from an underlying data distribution $\mathcal{D}$ of interest, a non-conformity score $\alpha_x$ is assigned to the input $x$ by the NCM defined on $X \cup \{x\}$. Larger value of $\alpha_x$ indicates greater deviation from $\mathcal{D}$ and, consequently, a higher likelihood that the sample is atypical.
\vspace{-4mm}
\paragraph{Classical Conformal Anomaly Detection:}
Conformal Anomaly Detection (CAD) utilizes the non-conformity score to assess the likelihood that an unseen input belonging to the same distribution as $\mathcal{D}$. This is done by computing $p$-value of the input $x$ by comparing its non-conformity score $\alpha_x$ with those of the datapoints in $X$ from the NCM defined on the new set $X \cup \{x\}$:
\begin{equation}
p\text{-}value = \frac{|\{\, i \in \{1, \ldots, l\} : \alpha_x \leq \alpha_i \,\}| + 1}{l + 1}.
\label{eq:cad}
\end{equation}
If $x$ is sampled from $\mathcal{D}$, its $p$-value will tend to be large; conversely, inputs exhibiting high deviation from $\mathcal{D}$ yield smaller $p$-values. An input is deemed anomalous when $p\text{-}value < \epsilon$, where $\epsilon \in (0,1)$ is a user-defined significance level controlling the allowable false-alarm probability.
\vspace{-3mm}
\paragraph{Inductive Conformal Anomaly Detection~\citep{icp}:}
While the classical formulation in~\eqref{eq:cad} is statistically sound, recomputing NCM scores across the entire set $X$ for every test input is computationally expensive. 
The \emph{Inductive Conformal Anomaly Detection} (ICAD) framework mitigates this cost by partitioning the data into a proper training subset $X_{\mathrm{tr}} = \{x_1, \ldots, x_m\}$ and a calibration subset $X_{\mathrm{cal}} = \{x_{m+1}, \ldots, x_l\}$. 
The NCM is defined on $X_{\mathrm{tr}}$ and then evaluated for each calibration datapoint to produce the set of calibration scores $\{\alpha_j\}_{j=m+1}^l$. 
For a new input $x$, its non-conformity is assessed by comparing its score $\alpha_x$ against the calibration scores:
\begin{equation}
p\text{-}value = \frac{|\{\, j \in \{m+1, \ldots, l\} : \alpha_x \leq \alpha_j \,\}| + 1}{l - m + 1},
\label{eq:icad}
\end{equation}
and, again, detecting $x$ as anomalous if the computed $p\text{-}value < \epsilon$. This inductive formulation permits all calibration scores to be pre-computed offline, thereby enabling efficient inference while retaining the theoretical validity of the conformal framework.
\vspace{-4mm}
\paragraph{Statistical Validity and Error Control~\citep{icp}:}
Under the standard i.i.d.\ assumption that both calibration and test samples are drawn from the same distribution $\mathcal{D}$, the $p$-values obtained via~\eqref{eq:icad} are uniformly distributed in the interval $(0,1)$. 
Consequently, the probability of a false alarm—incorrectly identifying an in-distribution sample as anomalous—is provably bounded by the significance threshold $\epsilon$:
\begin{equation}
\Pr\!\left[p(x \in \mathcal{D}) < \epsilon\right] \leq \epsilon.
\label{eq:icad_guarantees}
\end{equation}

The efficacy of the conformal framework depends on the underlying NCM. A variety of NCMs have been proposed in literature, employing methods such as $k$-nearest neighbors~\citep{icp}, variational autoencoders~\citep{vanderbilt}, memory prototypes~\citep{yang2024memory}, and transformation equivariance~\citep{idecode, kaur2024tcps}. We propose using the step-wise reward to define the NCM for labeling each step as sampled from the distribution of successful or failing step.

\section{Identifying Directions for Temporal Concepts in LLM Agents}
\label{sec:tech}
For an LLM agent trained to perform a multi-step sequential task, we hypothesize that the directions of success and failure for the task become linearly separable in the agent's internal representation space across layers and time. We aim to validate this hypothesis by training linear probes to classify the agent's activation space as success or failure at each step of the task.

The notion of success (or failure) is quantified by the \textit{step-wise reward} assigned to the agent at each sequential step taken to accomplish the task. Here, we provide details on a) generating these step-wise rewards via Monte-Carlo sampling of agent's trajectory from its learned policy on performing these tasks, b) leveraging inductive conformal anomaly detection (ICAD) framework for labeling these step-wise rewards as success (or failure) with bounded probability on making labeling errors, and c) training linear probes to classify the agent's internal activation space at each step as success or failure from the labeled step-wise reward.

\subsection{Generating Step-Wise Rewards}
Traditional learning based approaches rely solely on the \textit{final reward} $r$ assigned to the agent's complete trajectory on its ability to complete the task. This obscures the contribution of individual steps towards the goal. To achieve granular interpretability, we propose to use \textit{step-wise rewards} $\{r_t\}_{t=1}^T$, where $r_t$ represents the quantitative measure of success for the agent’s partial trajectory $\tau_t$ till time $t$:
$$\tau_t=(s_0, a_0, s_1, a_1, \dots, s_{t-1}, a_{t-1}, s_t).$$

To estimate $r_t$, we follow~\citet{ipr}'s approach on
performing Monte Carlo sampling on the agent's action space from its learned policy $\pi_{\theta}$ conditioned on the observed trajectory. Specifically, given $\tau_t$, we generate $N$ complete, subsequent expected trajectories $$e^{(i)}= (a^i_{t+1}, s^i_{t+1}, \ldots a^i_{final}, s^i_{final}),$$ by performing iterative Monte Carlo sampling on the agents action space starting from $a_t$ till $a^i_{final}$.

The step-level reward $r_t$ at time $t$ is calculated as:
$$
r_t =
\begin{cases}
\frac{1}{N} \sum_{i=1}^{N} r(\tau_t, e^{(i)}) & \text{for } t < n, \\
r(\tau_t), & \text{for } t = n.
\end{cases}
$$

This procedure allows the model to evaluate the \textit{expected future success probability} measured in terms of step-wise rewards that are conditioned on its current state. The use of Monte Carlo sampling transforms interpretability into a probabilistic, forward-looking signal, better aligning with how agents internally (intends to) plan the task execution.

\subsection{Labeling Step-Wise Rewards}
Once step-wise rewards are estimated, the key challenge is determining \textit{when a reward should be interpreted as success or failure.} We introduce the \textbf{Conformal Prediction–based labeling mechanism} to statistically calibrate both success and failure labels with bounded guarantees on making the labeling error.

Given the calibration set of step-wise rewards for both successful and failure steps, we can calculate the (non-)conformity of $r_t$ w.r.t success as well as failure. This gives us two $p$-values for the step: one corresponding to the likelihood of it being a successful step ($p_s$), and the other corresponding to the likelihood of it being a failure step ($p_f$). With the intuition of higher rewards for successful steps than failure, we propose $$\alpha_s = 1-r_t, \text{ and } \alpha_f = r_t.$$ as the non-conformity score with respect to the successful and failure steps respectively. $r_t$ is then labeled as:
\begin{equation}
\text{label}(r_t) =
\begin{cases}
\text{success},  & \text{ if } p_s \ge \epsilon_s \text{ and }  p_f < \epsilon_f, \\
\text{failure}, & \text{ if } p_f \ge \epsilon_f \text{ and }   p_s < \epsilon_s.
\end{cases}
\label{eq:label_reward}
\end{equation}

\begin{theorem}[Bounded Guarantees on making Labeling Errors]
The probability of labeling a successful step-wise reward as failure (or False Negative Rate) is strictly bounded by $\epsilon_s$, and the probability of labeling a failure step-wise reward as success (or False Positive Rate) is strictly bounded by $\epsilon_f$.  
\end{theorem}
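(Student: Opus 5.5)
The plan is to reduce each half of the statement to the ICAD validity guarantee~\eqref{eq:icad_guarantees}, applied separately to the two conformal detectors built from the two calibration sets. We work under the standing assumption of that guarantee. Let $\mathcal{D}_s$ and $\mathcal{D}_f$ denote the distributions of step-wise rewards of successful and failing steps. The calibration rewards $\{r^s_j\}$ for successes are exchangeable with a fresh successful reward $r_t\sim\mathcal{D}_s$, and likewise $\{r^f_j\}$ for failures. The NCMs $\alpha_s = 1-r$ and $\alpha_f = r$ are fixed functions that do not depend on the test point. Hence $p_s$ and $p_f$ are exactly the inductive $p$-values of~\eqref{eq:icad}, computed against the success and failure calibration sets respectively.

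\textbf{False negative rate.} Suppose $r_t$ is a successful step. By~\eqref{eq:label_reward}, the label \emph{failure} is assigned only if $p_f \ge \epsilon_f$ and $p_s < \epsilon_s$. So the event ``labeled failure'' is contained in $\{p_s < \epsilon_s\}$, which gives
\[
\Pr[\text{label}(r_t)=\text{failure}\mid r_t\sim\mathcal{D}_s] \le \Pr[p_s<\epsilon_s\mid r_t\sim\mathcal{D}_s] \le \epsilon_s .
\]
The last inequality is~\eqref{eq:icad_guarantees} applied to the success detector. Here the exchangeability of $r_t$ with the success calibration set is exactly the hypothesis needed.

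\textbf{False positive rate.} The argument is symmetric. A failing step is labeled \emph{success} only if $p_f<\epsilon_f$, and~\eqref{eq:icad_guarantees} for the failure detector bounds this probability by $\epsilon_f$. Steps falling in neither case of~\eqref{eq:label_reward} (ambiguous or both-rejected) receive no label. They only shrink the error events, so they do not affect the bounds.

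\textbf{Main obstacle.} The delicate point is the word ``strictly.'' Standard conformal validity gives $\Pr[p<\epsilon]\le\epsilon$, and equality is possible for $\epsilon$ of the form $k/(n+1)$ with continuous scores. To claim strict inequality I would first try to use the containment step. Let $E$ be the event ``labeled failure'' for a successful step. Then
\[
\Pr[E] = \Pr[p_s<\epsilon_s] - \Pr[p_s<\epsilon_s,\ p_f<\epsilon_f].
\]
The bound is therefore strict whenever the doubly-rejected event has positive probability. Otherwise I would state the result as the non-strict bound $\le$ and remark that this is the honest form.

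A second subtlety is that the success/failure ground truth of a step is itself defined through the rewards. I would make explicit the assumption that calibration and test steps of each class are i.i.d. from their class-conditional distributions. This mirrors the i.i.d. assumption under which~\eqref{eq:icad_guarantees} was stated.
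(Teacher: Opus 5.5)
Your reduction is the paper's own argument. The event that a successful step is labelled \emph{failure} lies inside $\{p_s<\epsilon_s\}$, and \eqref{eq:icad_guarantees} applied to the success detector bounds it by $\epsilon_s$; the false positive rate is handled symmetrically. You are also right that the paper's justification of ``strictly'' is too weak on its own. Saying the extra condition in \eqref{eq:label_reward} is an intersection that ``reduces the probability'' gives only a weak inequality, unless one separately shows that the doubly-rejected event $\{p_s<\epsilon_s,\ p_f<\epsilon_f\}$ has positive probability.

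The gap is in your ``Main obstacle'' paragraph, and it makes you fall back to the non-strict bound when you don't need to. Your claim that $\Pr[p<\epsilon]=\epsilon$ can occur at $\epsilon=k/(n+1)$ with continuous scores is false. You are conflating $\{p\le\epsilon\}$ with $\{p<\epsilon\}$, and the rejection rule here is the strict one.

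Here is why the strict bound holds. Because of the $+1$ in \eqref{eq:icad}, $R=(n+1)p$ is an integer in $\{1,\dots,n+1\}$, where $n$ is the number of calibration scores. Let $R_i$ be the analogous count for the $i$-th of the $n+1$ scores. By exchangeability, $\Pr[R\le k]=\frac{1}{n+1}\mathbb{E}\,|\{i:R_i\le k\}|$. At most $k$ indices can satisfy $R_i\le k$: the one with the smallest score among them has $R_i$ at least their number. Hence $\Pr[R\le k]\le k/(n+1)$ for every integer $k$, with ties included. With $n_s$ the size of the success calibration set, this gives
\[
\Pr[p_s<\epsilon_s]=\Pr\bigl[R\le\lceil\epsilon_s(n_s+1)\rceil-1\bigr]\le\frac{\lceil\epsilon_s(n_s+1)\rceil-1}{n_s+1}<\epsilon_s
\]
for every $\epsilon_s\in(0,1)$, since $\lceil x\rceil-1<x$. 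For example, at $\epsilon_s=k/(n_s+1)$ with continuous scores the left side is exactly $(k-1)/(n_s+1)$. The same argument applies to $p_f$.

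So the strict bound holds unconditionally, marginally over calibration and test draws, with no positivity assumption on the doubly-rejected event. The missing ingredient is this discreteness of the conformal $p$-value. It is not visible in \eqref{eq:icad_guarantees} as quoted, which records only $\le$. The paper's remark that the $p$-values are uniform on $(0,1)$ would, taken literally, make the bound tight. As written, your proposal proves less than the theorem states, and the reason you give for stopping there is wrong.
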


\begin{proof}
The proof without strict bounds, i.e. without the `and' (intersection) conditions in~\eqref{eq:label_reward}:
\begin{equation*}
\text{label}(r_t) =
\begin{cases}
\text{success},  & \text{ if } p_s \ge \epsilon_s, \\
\text{failure}, & \text{ if } p_f \ge \epsilon_f. 
\end{cases}
\end{equation*}
follows directly from the statistical guarantees of the inductive conformal anomaly detection (ICAD) framework~\eqref{eq:icad_guarantees}. 
In other words, the probability of $p$-value for successful $r_t$ less than $\epsilon_s$  is bounded by $\epsilon_s$. The second condition $p_f < \epsilon_f$ is an intersection, which reduces the overall probability. Therefore, the false negative rate is strictly bounded by $\epsilon_s$. Similarly, the probability of $p$-value for failure $r_t$ less than $\epsilon_f$ is bounded by $\epsilon_f$. The second condition $p_s < \epsilon_s$ is an intersection, which reduces the overall probability.  Therefore, the false positive rate is strictly bounded by $\epsilon_f$.

\end{proof}

\subsection{Representation Probing Across Timesteps}
At each time step $t$, the model’s hidden representation captures contextualized knowledge of its decisions till time $t$. We define the internal state $\mathbf{h}_t^L$ of an LLM agent as its residual stream activations at a specific layer $L$ and at the last token position corresponding to the trajectory $\tau_t$ till time $t$. This state $\mathbf{h}_t^L$ is the object of our interpretability study.

The challenge is to map these latent representations to an interpretable success/failure signal at every timestep. Having obtained calibrated step-wise reward labels, we investigate whether the corresponding model’s hidden representations for success and failure are linearly separable.
For this, we then train \textbf{linear probes} $P^L_{t}$ to classify  $h_t^L$  as success vs.\ failure: $\hat{y}_{t}^L = \sigma(W_{t}^L h_t^L + b^L_{t}),$ where $W_{t}^L$ and $b_{t}^L$ are the (weight and bias) probe parameters.

Classification accuracy of probe provides a quantitative measure of how distinctly the model encodes success and failure trajectories within its internal representation space~\citep{alain2017understanding, geom_of_truth}. 


\section{Experimental Results}
\label{sec:exp}
\subsection{Case Study I: ScienceWorld}

Our first case study is on sequential tasks in ScienceWorld~\citep{sciworld}, a large-scale text-based environment for evaluating an LLM agent’s ability to perform scientific reasoning and procedural tasks. 
\vspace{-4mm}
\paragraph{Environment:} 
Each instance of the environment represents a small virtual world inspired by elementary science domains—such as physics, chemistry, and biology—where the agent must explore, manipulate, and reason about objects to accomplish experiment-style goals. The environment comprises interconnected rooms (e.g., greenhouse, laboratory, workshop), each populated with interactive objects that support diverse affordances. Agents operate entirely through natural-language commands and receive textual feedback describing environmental state changes. Fig.~\ref{fig:sciworld_ex} in the Appendix illustrates a representative task, “Testing Conductivity”, along with the corresponding action-observation trajectory. Tasks in ScienceWorld are typically long-horizon and a normalized reward in $[0, 1]$, which is assigned at the end of each episode to reflect the agent’s overall task performance. Specifically, each task is decomposed into multiple sub-goals, and the final reward is computed based on how many of these sub-goals the agent achieves, thereby enabling a fine-grained evaluation of procedural and scientific reasoning capabilities. Examples of these sub-goals are illustrated in Fig.~\ref{fig:sw_steering_results}, where the preferred sequence of steps is mentioned in the task description.
\vspace{-4mm}
\paragraph{LLM Agent:} 
We train the \texttt{Llama-2-7B} model~\citep{llama-2} on $60\%$  ($889$ out of $1443$) of the training trajectories. The agent is trained using Supervised Fine-Tuning (SFT) on a curated dataset of expert trajectories formatted in the ReAct paradigm~\citep{yao2022react}, which explicitly interleaves natural language \texttt{Thought} and executable \texttt{Action} steps. The remaining $40\%$ is split equally between the calibration set for conformal labeling of step-wise rewards and the training set of probes on residual stream activations of all $32$ layers on the last token of the entire trajectory till time $t$. We observe that the trained agent is mostly able to successfully complete the assigned task within $10$ timesteps. 
\vspace{-4mm}
\paragraph{Testing Scenarios:} The environment offers two types of test scenarios: seen and unseen. Seen test set comprises of those tasks (or variations of tasks) that the agent encountered and learned from during its training phase. We refer to this test set as \textit{in-distribution} because it falls within the scope of the data the model was exposed to. Unseen test set comprises of those tasks that the agent never encountered during its training. These involve novel combinations of objects, new environmental layouts, or even entirely new scientific concepts. We refer to this test set as \textit{out-of-distribution} (OOD). We test on the entire set of $360$ in-distribution and $165$ OOD tasks.
\vspace{-4mm}
\paragraph{Conformal Thresholds:} We set $\epsilon_s = \epsilon_f = 0.1$ for labeling step-wise rewards as success or failure all timesteps. This strictly bounds both the false negative and false positive labeling errors to $10\%$~\citep{idecode}.

\vspace{-4mm}
\paragraph{Results and Observations:} Tables~\ref{tab:accuracy_train} and~\ref{tab:accuracy_test_unseen} show accuracy on iD  and OOD test set respectively from timesteps $t=2 \text{ to } 10$.  At timestep $1$, the model is given instructions on its role in the ScienceWorld environment and it always (irrespective of success or failure) responds with an `OK'.
Probes achieve significantly high accuracy with upto $100\%$ in most test cases for iD set and good accuracy in most test cases for OOD set except for one test case ($50\%$ at $t=10$ for layer $8$). 

Tables~\ref{tab:f1_train} and~\ref{tab:f1_test_unseen} show F1 scores on the in-distribution and OOD test sets, respectively. Similar to results on accuracy, F1 scores are also high on most test cases of the iD set except for one test case ($0.67$ at $t=10$ for layer $8$). These scores are also high on most test cases of the OOD set except for three cases: $0.67$ at $t=3$ for layers $24$ and $32$ and for layer $8$ at $t=8$. We observe similar results across all layers of the model, and selected early (layer $8$), middle ($16$ and $24$) and later ($32$) layers to show the results. These results validate our hypothesis that directions of success and failure become separable in an LLM fine-tuned to perform sequential tasks. 

\begin{table}[t]
  \centering
  \resizebox{1\linewidth}{!}{
  \begin{tabular}{cccccccccc}
    \toprule
    Layer & $t{=}2$ & $t{=}3$ & $t{=}4$ & $t{=}5$ & $t{=}6$ & $t{=}7$ & $t{=}8$ & $t{=}9$ & $t{=}10$ \\
    \midrule
     8  & 100 & 100 & 100 & 100 &  100 & 83 &  91 &  91 &  100 \\
    16  & 100 & 100 & 100 & 100 & 100 & 92 & 100 & 91 & 100  \\
    24  & 100 & 93 & 100 & 100 & 100 & 92 & 100 & 91 & 100 \\
    32  & 100 & 93 & 100 & 100 & 100 & 92 & 91 & 91 & 100 \\
    \bottomrule
  \end{tabular}
  }
  \caption{Accuracy($\%$) of Linear Probes on in-distribution Test Set of ScienceWorld across Layers and Timesteps.}
  \label{tab:accuracy_train}
\end{table}

\begin{table}[t]
  \centering
  \resizebox{1\linewidth}{!}{
  \begin{tabular}{cccccccccc}
    \toprule
    Layer & $t{=}2$ & $t{=}3$ & $t{=}4$ & $t{=}5$ & $t{=}6$ & $t{=}7$ & $t{=}8$ & $t{=}9$ & $t{=}10$ \\
    \midrule
     8  & 100 & 92 & 100 & 94 & 92 & 80 & 75 & 75 & 50  \\
    16  & 100 & 92 & 93 & 94 & 92 & 80 & 75 & 75 & 100 \\
    24  & 100 & 92 & 100 & 94 & 92 & 73 & 75 & 75 & 100  \\
    32  & 100 & 92 & 100 & 94 & 92 & 73 & 75 & 75 & 100 \\
    \bottomrule
  \end{tabular}
  }
  \caption{Accuracy($\%$) of Linear Probes on OOD Test Set of ScienceWorld across Layers and Timesteps.}
  \label{tab:accuracy_test_unseen}
\end{table}

\begin{table}[t]
  \centering
  \resizebox{1\linewidth}{!}{
  \begin{tabular}{cccccccccc}
    \toprule
    Layer & $t{=}2$ & $t{=}3$ & $t{=}4$ & $t{=}5$ & $t{=}6$ & $t{=}7$ & $t{=}8$ & $t{=}9$ & $t{=}10$ \\
    \midrule
     8  & 1.00 & 0.95 & 1.00 & 0.97 & 0.96 & 0.88 & 0.86 & 0.80 & 0.67 \\
    16  & 1.00 & 0.95 & 0.96 & 0.97 & 0.96 & 0.88 & 0.86 & 0.80 & 1.00 \\
    24  & 1.00 & 0.95 & 1.00 & 0.97 & 0.96 & 0.83 & 0.86 & 0.80 & 1.00 \\
    32  & 1.00 & 0.95 & 1.00 & 0.97 & 0.96 & 0.83 & 0.86 & 0.80 & 1.00 \\
    \bottomrule
  \end{tabular}
  }
  \caption{F1 Score of Linear Probes on in-distribution Test Set of ScienceWorld across Layers and Timesteps.}
  \label{tab:f1_train}
\end{table}

\begin{table}[t]
  \centering
  \resizebox{1\linewidth}{!}{
  \begin{tabular}{cccccccccc}
    \toprule
    Layer & $t{=}2$ & $t{=}3$ & $t{=}4$ & $t{=}5$ & $t{=}6$ & $t{=}7$ & $t{=}8$ & $t{=}9$ & $t{=}10$ \\
    \midrule
     8  & 1.00 & 1.00 & 1.00 & 1.00 & 1.00 & 1.00 & 0.67 & 0.86 & 0.80 \\
    16  & 1.00 & 1.00 & 1.00 & 1.00 & 1.00 & 0.80 & 1.00 & 0.80 & 1.00 \\
    24  & 1.00 & 0.67 & 1.00 & 1.00 & 1.00 & 0.80 & 1.00 & 0.80 & 1.00 \\
    32  & 1.00 & 0.67 & 1.00 & 1.00 & 1.00 & 0.80 & 0.86 & 0.80 & 1.00 \\
    \bottomrule
  \end{tabular}
  }
  \caption{F1 Score of Linear Probes on OOD Test Set across of ScienceWorld Layers and Timesteps.}
  \label{tab:f1_test_unseen}
\end{table}

\subsection{Case Study II: AlfWorld}

The second case study evaluates our conformal interpretability framework on the \emph{ALFWorld} environment \citep{alfworld}. AlfWorld serves as a widely adopted benchmark for autonomous LLM agents in interactive embodied settings, requiring complex sequential reasoning and navigation in simulated household environment~\citep{embodied_ai_survey, agentic_rl_survey}.
\vspace{-4mm}
\paragraph{Environment:}
ALFWorld is a text-based environment that grounds language instructions in a simulated household world derived from ALFRED~\citep{alfred}. Tasks involve multi-step goals such as fetching, cleaning, heating, or putting away objects (e.g., “put a clean mug in the cabinet”), requiring sequential decision-making, long-horizon planning, and interaction with diverse objects and room states. The agent receives textual observations and issues text-based actions (e.g., \texttt{go to kitchen}, \texttt{pick up mug}); performance is evaluated by whether the final goal is successfully completed. Fig.~\ref{fig:alfworld_trajectory} (Appendix) shows an example trajectory for a task of cleaning a tomato and placing it on a shelf next to the stove.


\vspace{-4mm}
\paragraph{LLM Agent and Training:}
Similar to ScienceWorld, here also we train Llama-2-7B~\citep{llama-2} on $60\%$ ($1710$ out of $2851$) of the training trajectories and split the remaining equally between calibration set and training the probes on all layers and timesteps.

\vspace{-4mm}
\paragraph{Conformal Thresholds:} Again, we set $\epsilon_s = \epsilon_f = 0.1$ for all timesteps $t \in \{1, 2, \ldots, 10\}$, strcitly bounding both false negative and false positive labeling errors to $10\%$.

\vspace{-4mm}
\paragraph{Results and Discussion:}
Tables~\ref{tab:alfworld_accuracy_test_unseen} and ~\ref{tab:alfworld_f1_test_unseen} show the accuracy and F1 scores of the trained probes on the test set of the AlfWorld, respectively. The accuracy varies from $60\%$ (layer $8$ at $t=6$ and layer $24$ at $t=4$) to $95\%$ (at $t=2$), and the F1 score varies from $0.56$ (layer $24$ at $t=4$ to $0.95$ (at $t=2$). Although these results are comparable to the probe accuracies reported for distinguishing truth and falsehood directions in factual question settings~\citep{geom_of_truth}, they are lower than our results on ScienceWorld. We hypothesize that this difference arises because ALFWorld provides only a single final reward upon task completion, whereas ScienceWorld offers intermediate rewards for sub-goals at multiple steps. This denser reward structure in ScienceWorld yields more precise step-wise feedback, leading to improved predictive performance. 

\begin{table}[htbp]
  \centering
  
  
  


  \resizebox{1\linewidth}{!}{
  \begin{tabular}{cccccccccc}
    \toprule
    Layer & $t{=}2$ & $t{=}3$ & $t{=}4$ & $t{=}5$ & $t{=}6$ & $t{=}7$ & $t{=}8$ & $t{=}9$ & $t{=}10$ \\
    \midrule
     8  & 95 & 85 & 75 & 75 & 60 & 80 & 75 & 70 & 75 \\
    16  & 95 & 90 & 65 & 65 & 75 & 65 & 80 & 80 & 75 \\
    24  & 95 & 90 & 60 & 85 & 80 & 60 & 80 & 75 & 75 \\
    32  & 95 & 85 & 75 & 85 & 80 & 75 & 90 & 80 & 75 \\
    \bottomrule
  \end{tabular}
  }
  \caption{Accuracy ($\%$) of Linear Probes on ALFWorld Test Set across Layers and Timesteps.}
  \label{tab:alfworld_accuracy_test_unseen}

  \vspace{0.5em} 

  \resizebox{1\linewidth}{!}{
  \begin{tabular}{cccccccccc}
    \toprule
    Layer & $t{=}2$ & $t{=}3$ & $t{=}4$ & $t{=}5$ & $t{=}6$ & $t{=}7$ & $t{=}8$ & $t{=}9$ & $t{=}10$ \\
    \midrule
     8  & 0.95 & 0.82 & 0.71 & 0.76 & 0.64 & 0.82 & 0.76 & 0.62 & 0.71 \\
    16  & 0.95 & 0.89 & 0.59 & 0.63 & 0.74 & 0.63 & 0.78 & 0.78 & 0.67 \\
    24  & 0.95 & 0.89 & 0.56 & 0.84 & 0.80 & 0.60 & 0.80 & 0.71 & 0.67 \\
    32  & 0.95 & 0.82 & 0.74 & 0.86 & 0.80 & 0.74 & 0.89 & 0.78 & 0.67 \\
    \bottomrule
  \end{tabular}
  }
  \caption{F1 Score of Linear Probes on ALFWorld Test Set across Layers and Timesteps.}
  \label{tab:alfworld_f1_test_unseen}
\end{table}

\textbf{Steerability of Models Towards Success:} Using the labeled internal representations provided by our framework, we apply the existing steering approach Representation Engineering (RepE)~\citep{repe} to compute contrastive activation directions between successful and failing trajectories, and add these directions during inference to steer the agent’s internal states toward success while preserving overall performance. Motivated by the empirical observation that success and failure trajectories begin to diverge at timestep $3$, we perform a single early intervention at step $3$ on a supervised fine-tuned Llama-2-7B agent in ScienceWorld. We use a small steering coefficient of 0.025 as selected by following RepE’s procedure. This intervention yields a 1.1\% accuracy boost, which is competitive with substantially more expensive methods—Best-of-$N$ sampling, Rejection Sampling Fine-Tuning (RFT), and Direct Preference Optimization (DPO)—that obtain 2.8\%, 4.2\%, and 6.8\% accuracy gains, respectively, on the same test settings.

Fig.~\ref{fig:sw_steering_results} illustrates two test cases where the steered model corrects common failure modes: (a) failing to follow the correct sequence of sub-goals and (b) drifting off track mid-trajectory. The reported steering results are preliminary as we applied an off-the-shelf steering approach (RepE) only at timestep $3$. RepE has been proposed to steer standalone inputs. Leveraging the proposed framework for developing and applying steering approaches for temporal data across steps is one of the future directions. 


\begin{figure}[!t]
\centering
\setlength{\abovecaptionskip}{2pt}

\begin{subfigure}{\linewidth}
    \centering
    \includegraphics[width=0.9\linewidth]{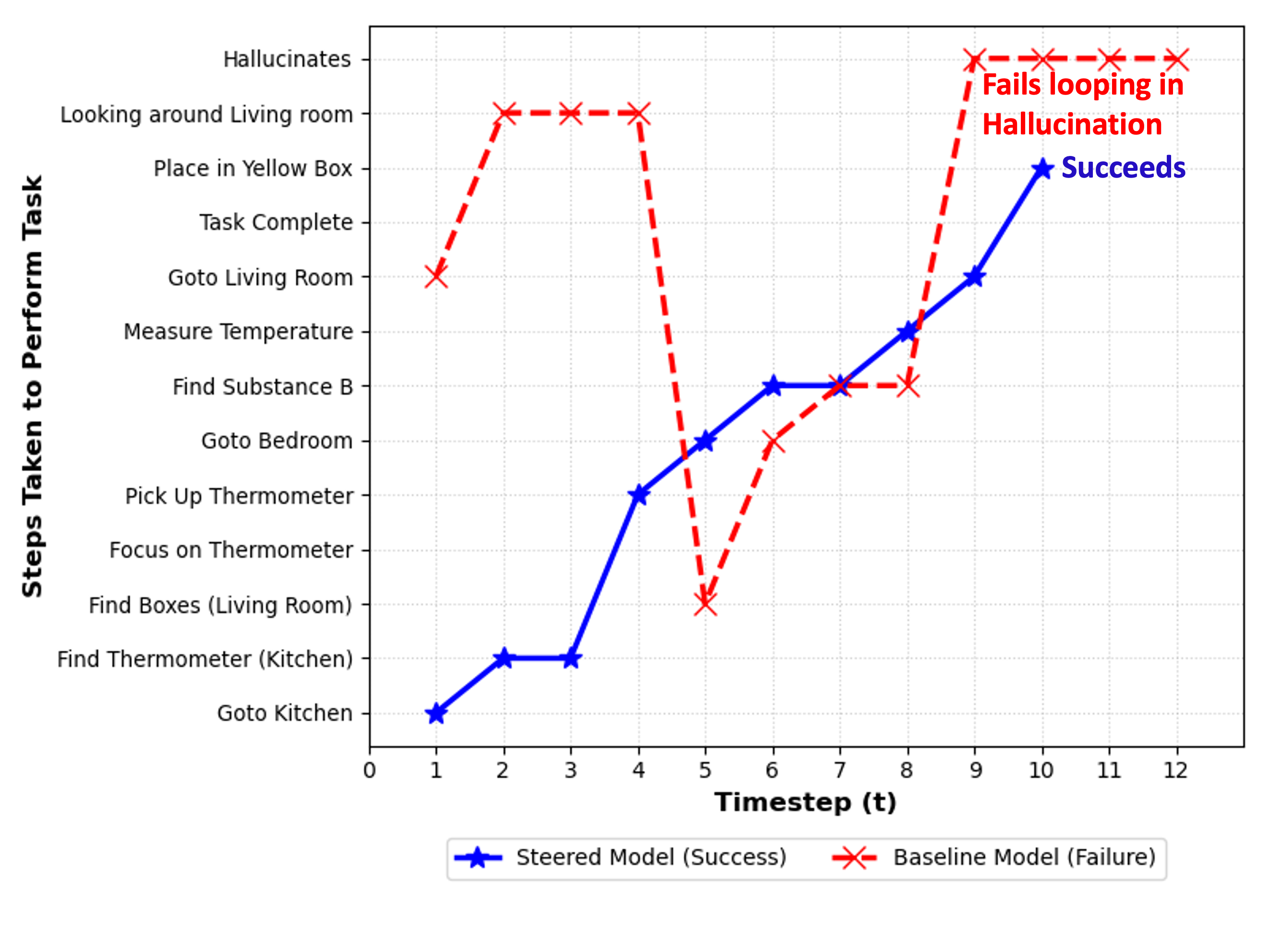}
    \caption{\footnotesize{
    Task: ``Measure the temperature of substance B in bedroom. First, focus on the thermometer. Next, focus on the unknown substance B. Then, if temperature $> 0.0^\circ$C $\rightarrow$ place it in the yellow box; if $< 0.0^\circ$C $\rightarrow$ place it in the purple box. Boxes are in the living room.'' The steered model (blue) prioritizes the correct instruction order and completes the task, while the baseline (red) follows the wrong order and loops due to hallucinating possession of the thermometer. 
    }}
    \label{fig:steered_task2}
\end{subfigure}

\vspace{4pt}

\begin{subfigure}{\linewidth}
    \centering
    \includegraphics[width=0.9\linewidth]{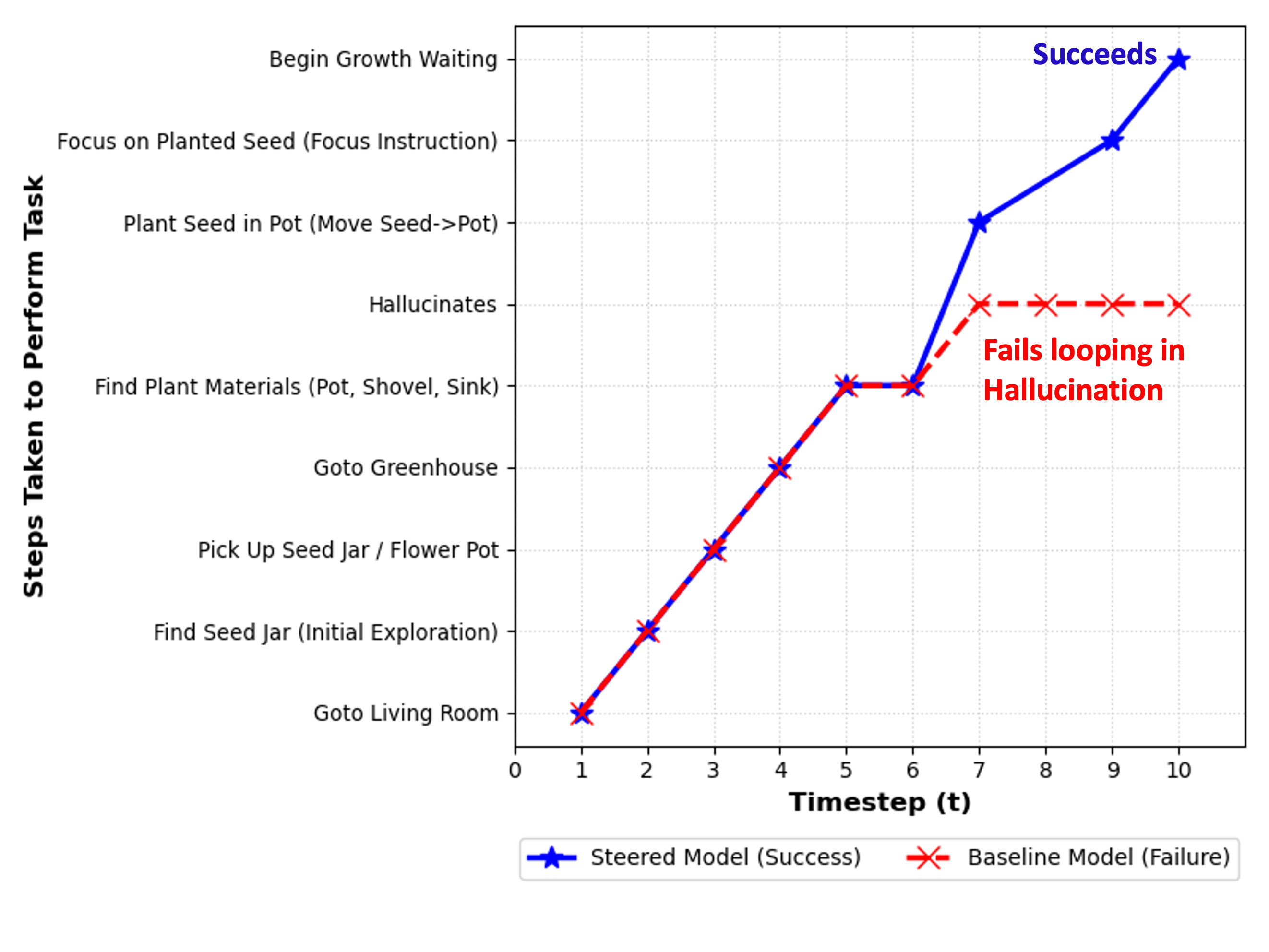}
    \caption{\footnotesize{
    Task: ``Grow Plant to Reproduction. Seeds can be found in the living room. First, focus on a seed. Then, modify the environment to grow the plant until it reaches the reproduction stage.'' The baseline model (red) initially follows the correct sequence but later drifts due to hallucination. 
    }}
    \label{fig:steered_task3}
\end{subfigure}
\caption{\footnotesize{Comparison of baseline (SFT Llama-2-7B) and steered LLM agents -- Steering along learned \emph{success directions} mitigates reasoning drift, reduces hallucinations, and improves task completion.}}
\label{fig:sw_steering_results}
\end{figure}
\section{Conclusion}
\label{sec:conc}
This work introduces a conformal time-series interpretability framework for analyzing the evolving internal representations of LLM agents on sequential tasks. By combining step-wise reward modeling with inductive conformal prediction, it assigns statistically calibrated success/failure labels to internal states at each timestep, enabling fine-grained temporal analysis and linear probing of success vs.\ failure directions. Experiments on two complex interactive environments show that these directions are linearly separable and can be used to steer agents toward more successful outcomes, supporting early detection of misalignment and step-level intervention. Future work will extend the framework to multimodal embodied settings and explore proactive steering of autonomous LLM agents via interpretable temporal feedback.


\section*{Acknowledgments}
This material is based on work supported by the United States Air Force and Defense Advanced Research Projects Agency (DARPA) under Contract No. FA8750-23-C-0519 and under Agreement No. HR0011-24-9-0424. The views, opinions and/or findings expressed are those of the author and should not be interpreted as representing the official views or policies of the Department of War or the U.S. Government. The authors also acknowledge the use of Advanced Research Computing Technology Innovation and Collaboration (ARCTIC) resources at Georgia State University, supported by the National Science Foundation under Major Research Instrumentation (MRI) grant CNS-1920024 and the Research Infrastructure: CC Compute-Campus award for Pioneering Research-Oriented Flexible Cyberinfrastructure (PROFLEX-CI), grant 2430193.

\newpage
{
    \small
    \bibliographystyle{ieeenat_fullname}
    \bibliography{main}
}

\newpage
\newpage
\section{Supplementary Material}
\label{appendix}
\subsection{Task Examples from ScienceWorld and AlfWorld}

Figure~\ref{fig:sciworld_ex} illustrates a representative task, “Testing Conductivity” in ScienceWorld, along with the corresponding action-observation trajectory, and Figure~\ref{fig:alfworld_trajectory} shows an example trajectory of the agent-environment interaction on one of the household's task of ``cleaning a tomato and putting it on the shelf next to stove''.

\begin{figure*}[!t]
    \centering
    \caption{Task examples from ScienceWorld and ALFWorld. \textbf{(Top)} An example of a successful trajectory for the "Test Conductivity" task in \texttt{ScienceWorld}, demonstrating the sequential reasoning required by the agent to accomplish the task. \textbf{(Bottom)} An example of a successful trajectory for the ``Cleaning a tomato and putting it on the shelf next to stove'' task in the ALFWorld environment, demonstrating the multi-step nature of a typical sequential task in the environment requiring commonsense, navigation, and object manipulation.}
    \label{fig:task_examples}

    \vspace{4pt}
    {\small\textbf{ScienceWorld: Test Conductivity}}
    \vspace{2pt}

    \begin{tabular}{c|l|l}
        \toprule
        \textbf{Step} & \textbf{Agent's Action} & \textbf{Environment Observation (Simplified)} \\
        \midrule
        \textbf{1} & \texttt{go to tool room} & You are in the tool room. You see a light bulb and a battery. \\
        \textbf{2} & \texttt{pick up light bulb} & You picked up the light bulb. \\
        \textbf{3} & \texttt{pick up battery} & You picked up the battery. \\
        \textbf{4} & \texttt{go to workshop} & You are in the workshop. You see a metal fork and a plastic cup. \\
        \textbf{5} & \texttt{pick up metal fork} & You picked up the metal fork. \\
        \textbf{6} & \texttt{use battery on light bulb} & The battery and the light bulb are now connected in a circuit. \\
        \textbf{7} & \texttt{use metal fork on light bulb} & The light bulb illuminates! The metal fork is electrically conductive. \\
        \bottomrule
    \end{tabular}
    \label{fig:sciworld_ex}

    \vspace{10pt}
    {\small\textbf{ALFWorld: Clean tomato}}
    \vspace{2pt}

    \begin{tabular}{c|l|l}
        \hline
        \textbf{Step} & \textbf{Agent's Action} & \textbf{Environment Observation (Simplified)} \\
        \hline
        1 & \texttt{open refrigerator} & The refrigerator is now open. You see milk, tomato, cheese, and carrots. \\
        2 & \texttt{take tomato from refrigerator} & You are now holding the tomato. \\
        3 & \texttt{go to washbasin} & You are at the washbasin. You see running water. \\
        4 & \texttt{wash tomato in washbasin} & The tomato is now clean. \\
        5 & \texttt{go to shelf1 (next to stove)} & You are near the stove. You see shelf1 next to stove. \\
        6 & \texttt{put tomato on shelf1} & The tomato has been placed on shelf1 next to the stove. \\
        \hline
    \end{tabular}
    \label{fig:alfworld_trajectory}
\end{figure*}

\end{document}